\title{Probabilistic Neural Circuits}
\author {
    Pedro Zuidberg Dos Martires
}
\pgfplotsset{compat=1.18}
\newcommand{\cf}{cf.\xspace}
\newcommand{\eg}{e.g.\xspace}
\newcommand{\ie}{i.e.\xspace}
\newtheorem{theorem}{Theorem}[section]
\newtheorem{definition}[theorem]{Definition}
\newtheorem{corollary}[theorem]{Corollary}
\newtheorem{proposition}[theorem]{Proposition}
\newtheorem{example}[theorem]{Example}
\newenvironment{talign}
{\align}
{\endalign}
\newcommand{\circuit}{\ensuremath{p}}
\newcommand{\inputs}{\ensuremath{\text{in}}}
\newcommand{\Xvars}{\ensuremath{\mathbf{X}}}
\newcommand{\xvars}{\ensuremath{\mathbf{x}}}
\newcommand{\Xvar}{\ensuremath{X}}
\newcommand{\xvar}{\ensuremath{x}}
\newcommand{\poset}{\ensuremath{\mathcal{O}}}
\newcommand{\weight}{\ensuremath{w}}
\newcommand{\nweight}{\ensuremath{\omega}}
\newcommand{\scope}{\ensuremath{\phi}}
\newcommand{\parents}{\ensuremath{{pa}}}
\newcommand{\ancestors}{\ensuremath{{an}}}
\newcommand{\component}{\ensuremath{{\kappa}}}
\newcommand{\midlinewidth}{1.0pt}
\newcommand{\middist}{24pt}
\newcommand{\smalldist}{20pt}
\newcommand{\halfdist}{4pt}
\renewenvironment{proof}[1][\proofname]{\par
    \pushQED{\qed}%
    \normalfont\topsep0pt \partopsep0pt % Adjust the vertical spacing above
    \trivlist
    \item[\hskip\labelsep
                \itshape
                #1\@addpunct{.}]\ignorespaces
}{%
    \popQED\endtrivlist\@endpefalse
    \vskip 1ex  % Add some flexible glue for the bottom margin
}
\begin{document}

\maketitle

\begin{abstract}
    Probabilistic circuits (PCs) have gained prominence in recent years as a versatile framework for discussing probabilistic models that support tractable queries and are yet expressive enough to model complex probability distributions.
    Nevertheless, tractability comes at a cost: PCs are less expressive than neural networks.
    In this paper we introduce probabilistic neural circuits (PNCs), which strike a balance between PCs and neural nets in terms of tractability and expressive power. Theoretically, we show that PNCs can be interpreted as deep mixtures of Bayesian networks. Experimentally, we demonstrate that PNCs constitute powerful function approximators.
    % In this paper we explore This paper explores the interplay between PCs and neural networks, presenting the concept of Probabilistic Neural Circuits (PNCs) as an intermediary model. PNCs are introduced as deep mixtures of Bayesian networks, offering a structured approach to capturing complex dependencies in data.
\end{abstract}

\section{Introduction}

In recent years probabilistic circuits (PCs) (also called sum-product networks)~\citep{darwiche2003differential,poon2011sum} have emerged as an assembly language to talk about tractable probabilistic models and inference therein~\citep{vergari2021compositional}. The core idea is quite simple: we start with a set of independent random variables and construct complex probability distribution by recursively adding and multiplying them together.
There are two common ways of interpreting PCs. Firstly, we can consider them to be hierarchical mixture models. Secondly, we look at them as neural nets consisting of sums, products, and atomic probability distributions.

Most of the recent advances in the field adhere to the second perspective: use an overparametrized probabilistic model and fit it to data using gradient based methods by leveraging discrete GPUs~\citep{peharz2019random,dang2021juice}. The computation units of such circuits are organized in a layered fashion. We give an example in Figure~\ref{fig:circuit}.

\begin{figure}[t]
    \centering
    \resizebox{0.85\columnwidth}{!}{
        \tikzset{point/.style={circle,inner sep=0pt,minimum size=3pt,fill=red}}

    \begin{tikzpicture}

        \varnode[line width=\midlinewidth]{v11}{$X_1$};
        \varnode[line width=\midlinewidth, right=\middist of v11]{v12}{$X_1$};
        \varnode[line width=\midlinewidth, right=\middist of v12]{v21}{$X_2$};
        \varnode[line width=\midlinewidth, right=\middist of v21]{v22}{$X_2$};

        \sumnode[line width=\midlinewidth, above=\smalldist of v11]{s11};
        \sumnode[line width=\midlinewidth, right=\halfdist of s11]{s12};
        \sumnode[line width=\midlinewidth, above=\smalldist of v12]{s1k};

        \sumnode[line width=\midlinewidth, above=\smalldist of v21]{s21};
        \sumnode[line width=\midlinewidth, right=\halfdist of s21]{s22};
        \sumnode[line width=\midlinewidth, above=\smalldist of v22]{s2k};

        \prodnode[line width=\midlinewidth, above=\smalldist of s1k]{p121};
        \prodnode[line width=\midlinewidth, right=\halfdist of p121]{p122};
        \prodnode[line width=\midlinewidth, above=\smalldist of s21]{p12k};

        \sumnode[line width=\midlinewidth, above=\smalldist of p121]{s121};
        \sumnode[line width=\midlinewidth, above=\smalldist of p122]{s122};
        \sumnode[line width=\midlinewidth, above=\smalldist of p12k]{s12k};

        % edges
        \edge[line width=\midlinewidth,left] {s11, s12, s1k} {v11, v12};
        \edge[line width=\midlinewidth,left] {s21, s22, s2k} {v21, v22};

        \edge[line width=\midlinewidth,dashed] {p121} {s11, s21};
        \edge[line width=\midlinewidth,dashed] {p122} {s12, s22};
        \edge[line width=\midlinewidth,dashed] {p12k} {s1k, s2k};

        \edge[line width=\midlinewidth,left] {s121, s122, s12k} {p121, p122, p12k};

        %%%%% V3 and V4
        \varnode[line width=\midlinewidth, right=\middist of v22]{v31}{$X_3$};
        \varnode[line width=\midlinewidth, right=\middist of v31]{v32}{$X_3$};
        \varnode[line width=\midlinewidth, right=\middist of v32]{v41}{$X_4$};
        \varnode[line width=\midlinewidth, right=\middist of v41]{v42}{$X_4$};

        \sumnode[line width=\midlinewidth, above=\smalldist of v31]{s31};
        \sumnode[line width=\midlinewidth, right=\halfdist of s31]{s32};
        \sumnode[line width=\midlinewidth, above=\smalldist of v32]{s3k};

        \sumnode[line width=\midlinewidth, above=\smalldist of v41]{s41};
        \sumnode[line width=\midlinewidth, right=\halfdist of s41]{s42};
        \sumnode[line width=\midlinewidth, above=\smalldist of v42]{s4k};

        \prodnode[line width=\midlinewidth, above=\smalldist of s3k]{p341};
        \prodnode[line width=\midlinewidth, right=\halfdist of p341]{p342};
        \prodnode[line width=\midlinewidth, above=\smalldist of s41]{p34k};

        \sumnode[line width=\midlinewidth, above=\smalldist of p341]{s341};
        \sumnode[line width=\midlinewidth, above=\smalldist of p342]{s342};
        \sumnode[line width=\midlinewidth, above=\smalldist of p34k]{s34k};

        % edges
        \edge[line width=\midlinewidth,left] {s31, s32, s3k} {v31, v32};
        \edge[line width=\midlinewidth,left] {s41, s42, s4k} {v41, v42};

        \edge[line width=\midlinewidth,dashed] {p341} {s31, s41};
        \edge[line width=\midlinewidth,dashed] {p342} {s32, s42};
        \edge[line width=\midlinewidth,dashed] {p34k} {s3k, s4k};

        \edge[line width=\midlinewidth,left] {s341, s342, s34k} {p341, p342, p34k};

        %%%% final root
        \prodnode[line width=\midlinewidth, above=90pt of s2k]{p12341};
        \prodnode[line width=\midlinewidth, right=\halfdist of p12341]{p12342};
        \prodnode[line width=\midlinewidth, above=90pt of s31]{p1234k};

        \sumnode[line width=\midlinewidth, above=\smalldist of p12342]{s12341};

        \edge[line width=\midlinewidth,dashed] {p12341} {s121, s341};
        \edge[line width=\midlinewidth,dashed] {p12342} {s122, s342};
        \edge[line width=\midlinewidth,dashed] {p1234k} {s12k, s34k};

        \edge[line width=\midlinewidth,left] {s12341} {p12341, p12342, p1234k};

        \draw [decorate, decoration = {brace, mirror, amplitude=10pt}, ultra thick] (6.2,4.7) --  (6.2,6.8) node[pos=0.5,right=10pt]{ \begin{tabular}{c}
                root layer \\(Layer 3)
            \end{tabular} };
        \draw [decorate, decoration = {brace, mirror, amplitude=10pt}, ultra thick] (9.1,2.2) --  (9.1,4.3) node[pos=0.5,right=10pt]{\begin{tabular}{c}
                sum-product layer \\(Layer 2)
            \end{tabular} };
        \draw [decorate, decoration = {brace, mirror, amplitude=10pt}, ultra thick] (10.5,-0.4) --  (10.5,1.7) node[pos=0.5,right=10pt]{\begin{tabular}{c}
                leaf layer \\(Layer 1)
            \end{tabular} };

        \node[draw=red, circle,dashed,ultra thick, minimum width=0.7cm] at (p121.center) (c1) {};
        \node[draw=red, circle,dashed,ultra thick, minimum width=0.7cm] at (p122.center) (c2) {};
        \node[draw=red, circle,dashed,ultra thick, minimum width=0.7cm] at (p12k.center) (c3) {};

        \node[draw=red, circle,dashed, ultra thick, minimum width=0.7cm, below=1.2 of p12342, inner sep=0pt]  (mid) {};

        \path (c1) edge[-, red,dashed, ultra thick, in=160, out=35] (mid);
        \path (c2) edge[-, red,dashed, ultra thick, in=175, out=45] (mid);
        \path (c3) edge[-, red,dashed, ultra thick, in=220, out=-20] (mid);

        \path (mid) edge[->, red,dashed, ultra thick, in=190, out=25] (s341);
        \path (mid) edge[->, red,dashed, ultra thick, in=220, out=10] (s342);
        \path (mid) edge[->, red,dashed, ultra thick, in=230, out=-10] (s34k);

    \end{tikzpicture}
    }
    \caption{Layered probabilistic circuit following the construction of~\citep{shih2021hyperspns}. Data (modeled as random variables) is first fed into the leaf layer at the bottom. The output of the leaf layer is a mixture of distributions produced by the sum units. In the sum-product layer (Layer 2)  mixtures of random variables are combined by taking pairwise products, these are then again mixed using sum units. Finally, the root layer (at the top) gives us the joint probability distribution. The red edges indicate functional dependencies not present in traditional probabilistic circuits but present in probabilistic neural circuits.}
    \label{fig:circuit}
\end{figure}
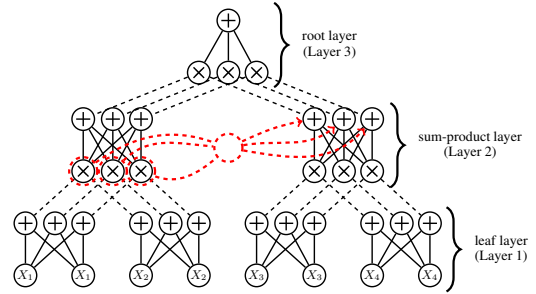

A major advantage of PCs is their ability to answer certain queries in polynomial time -- given that adequate restrictions are imposed on a circuit's structure~\citep{vergari2021compositional}. An example of such a tractable query would be the computation of conditional probabilities for so-called \textit{smooth and decomposable} PCs~\citep{darwiche2001decomposable,darwiche2003differential}.

This tractability, however, comes at a hefty price: the properties imposed on PCs in order to ensure polynomial time queries limit their expressive power~\citep{martens2014expressive,sharir2018sum,zhang2021probabilistic}.
This is in contrast to general neural networks and even sum-product networks with fewer structural constraints~\citep{delalleau2011shallow,kileel2019expressive}. \citet{martens2014expressive} have shown that decomposability is in fact a necessary condition for tractable marginal inference.

Nevertheless, using the concepts of \textit{conditional smoothness} and \textit{conditional decomposability}~\citep{sharir2018sum}, we study in this paper the space of models that lie in between probabilistic circuits and neural networks.
Concretely, we make the following contributions:
\begin{enumerate}
    \item We introduce conditional probabilistic circuits, from which we construct  probabilistic neural circuits (PNCs), which we interpret as deep mixtures of Bayesian nets.
    \item  We provide a prescription to construct layered PNCs.
    \item We provide an implementation of layered PNCs and experimentally study their expressive power.
\end{enumerate}

Our work is influenced by that of \citet{sharir2018sum}. We discuss the relationship to their approach (dubbed sum-product-quotient networks) in Section~\ref{sec:related}.

\section{Preliminaries}
\label{sec:prelim}

In the remainder of the paper we will denote random variables by uppercase $\Xvar$'s, the corresponding values are lowercase $\xvar$'s. Sets of random variables and their corresponding values are typed in boldface: $\Xvars$ and $\xvars$, respectively.
The definitions and notions we introduce in this section are loosely based on the work of~\citet{vergari2021compositional}.

\begin{definition}[Probabilistic Circuit]
    \label{def:circuit}
    A probabilistic circuit over random variables $\Xvars$ is a parametrized computational graph encoding a probability density function $\circuit(\Xvars{=}\xvars)$.
    The circuit consists of three kinds of computational units:
    \textit{leaf}, \textit{product}, and \textit{sum}.
    Each product or sum unit receives inputs from a set of input units denoted by $\inputs(n)$.
    Each unit $k$ encodes a function $\circuit_{k}(\cdot)$ as follows:
    \begin{align*}
        {\circuit}_k(\xvars_n)=
        \begin{cases}
            f_k({\xvars_n})                                             & \text{if $k$ leaf unit}    \\
            \circuit_{k_l}(\xvars_{n_l})  \circuit_{k_r} (\xvars_{n_r}) & \text{if $k$ product unit} \\
            \sum_{j\in\inputs(k)} \weight_{kj} \circuit_j(\xvars_n)     & \text{if $k$ sum unit}
        \end{cases}
        % \label{eq:def:circuit}
    \end{align*}
    where $f_k(\xvars_n)$  denotes a parametrized probability distribution having as support the sample space of the random variables in $\Xvars_n$.
\end{definition}

\begin{definition}[Scope]
    \label{def:scope:cond}
    The scope of a unit $k$, denoted by $\scope(k)$, is the set of random variables $\Xvars_n$ for which the function $\circuit_k (\cdot)$ encodes a probability distribution.
\end{definition}

Two important properties that are usually imposed on probabilistic circuits are smoothness and decomposability as they allow for tractable queries, \eg computing marginals~\citep{darwiche2001decomposable,darwiche2003differential}.
\begin{definition}[Smoothness]
    A circuit is smooth if for every sum unit $k$ its inputs encode distributions over the same random variables:
    $\forall j_1, j_2 {\in} \inputs(k)$ it holds that $\scope(j_1){=}\scope(j_2) $.
\end{definition}

\begin{definition}[Decomposability]
    A circuit is decomposable if the inputs of every product unit
    $k$ encode distributions over disjoint sets of random variables:
    $\scope (k_l) \cap \scope (k_r) = \emptyset$ with $\{k_l, k_r\}= \inputs (k)$.
\end{definition}

\begin{definition}[Valid Probabilsitic Circuit]
    We call a probabilistic circuit valid if for every unit $k$ we have that $p_k(\xvars_n){\geq} 0$ and $\int p_k(\xvars_n) \differential \xvars_n{=}1$.\footnote{Note that our notion of validity is slightly stricter than in its original definition, \cf \citep{poon2011sum}}
\end{definition}

As discussed by \citet{peharz2015theoretical}, probabilistic circuits are valid if they are smooth, decomposable, and that for the weights in the sum units we have $\weight_{kj} \in \mathbb{R}^+$ and $\sum_{j\in\inputs(k)} \weight_{kj} = 1$ for every $k$. Note that the notation in Definition~\ref{def:circuit} already suggests that the circuit is smooth as the inputs to the sum units are functions over the same set of variables $\Xvars_n$.

Furthermore, we can assume, without loss of generality, that sum and product units occur in an alternating fashion in the circuit~\citep{peharz2020einsum}.
This observation naturally leads to a layer-wise construction of probabilistic circuits where consecutive layers alternate between sum and product layers.
Such layered probabilistic circuits~\citep{peharz2019random} have the advantage that the computations within a layer can be trivially parallelized.
We can further abstract the layers in a circuit by fusing together alternating sums and products into a single sum-product layer~\citep{peharz2020einsum}.

In Figure~\ref{fig:circuit} we give a graphical representation of a layered circuit. Layers consist of  blocks of computational units that are processed sequentially in a bottom-up fashion. Layers are themselves constituted of so-called partitions. The circuit in Figure~\ref{fig:circuit} has four partitions in the leaf layer, two in the sum-product layer, and a single partition in the root layer. By construction, partitions in the same layer have disjoint scopes. Moreover, partitions are further subdivided into input components and output components, which constitute the elemental computing units. The circuit in Figure~\ref{fig:circuit} has, except at the very bottom and top, three such input and output components in each partition.

We can uniquely identify each computational unit (or component) in the circuit by specifying the layer, the partition, whether it is an input or an output, and its position within a partition. Counting layers from bottom to top, and partitions and units from left to right. Each component can be identified using 4 indices: $\component_{l,p,i,c}$. The first index $l$ identifies the layer, the second $p$ the partition, the third $i \in \{1,2 \}$ whether it is an input or output, and the fourth $c$ the horizontal position within a partition.
For instance, the symbol $\component_{2223}$ corresponds to  the upper-right unit in the sum-product layer.

\section{Conditional Probabilistic Circuits}
\label{sec:cpc}

We will first introduce the notion of posets (partially ordered sets) of random variables (Section~\ref{sec:poset}).
This will allow us to generalize probabilistic circuits to conditional probabilistic circuits, which we interpret as deep mixtures of Bayesian networks (Section ~\ref{sec:dmbn}).
We then introduce probabilistic neural circuits and their tractable queries (Section~\ref{sec:pnc}).

\subsection{Partially Ordered Random Variables}
\label{sec:poset}

Consider a set of random variables $\Xvars {=}\{\Xvar_1,\dots, \Xvar_N \}$ on which we impose the \textit{parents} relationship $\parents(\cdot)$.
The parents relationship induces a directed acyclic graph on the random variables $\Xvars$, where the nodes are the random variables themselves and an edge is present between $\Xvar_i$ and $\Xvar_j$ if $\Xvar_i {\in} \parents(\Xvar_j)$.
This gives us a partial ordering of the variables $\Xvars$.
We also define the ancestor relationship $\ancestors(\cdot)$ to be the transitive closure of $\parents(\cdot)$. That is, the ancestors of a random variable are its direct parents and recursively their parents.
Furthermore, we denote the poset for the random variables $\Xvars$ by $\mathcal{O}(\Xvars)$. We say that the relationship $\Xvars_i {\sqsubset} \Xvars_j$ between two sets holds if
$
    \forall \Xvar_r {\in} \Xvars_i, \Xvar_q {\in} \Xvars_j : \Xvar_q {\notin}  \ancestors(\Xvars_r)
$.
We also define the relation $\Xvar_i\sqsubset \Xvar_j$ on random variables as $\{\Xvar_i \}{\sqsubset} \{\Xvar_j \}$.

\begin{example}[Bayesian Network]
    Partially ordered random variables induce a factorization of a joint probability distribution. A prominent example of such a factorization are Bayesian networks (\cf Figure ~\ref{fig:bn}) where we have:
    \begin{talign}
        \label{eq:bn}
        p(\Xvars{=}\xvars) = \prod_{n:\Xvar_n \in \Xvars} p_n(X_n{=}x_n \mid \Xvars_{\parents(n)}{=}\xvars_{\parents(n)}).
    \end{talign}
\end{example}

\begin{figure}

    \begin{minipage}{0.50\linewidth}
        \begin{align*}
            X_1 \sqsubset X_2 &  & X_1 \sqsubset X_4 \\
            X_1 \sqsubset X_3 &  & X_3 \sqsubset X_1 \\
            X_3 \sqsubset X_2 &  & X_3 \sqsubset X_4 \\
            X_2 \sqsubset X_3 &  & X_2 \sqsubset X_4 \\
        \end{align*}

    \end{minipage}%
    \begin{minipage}{0.49\linewidth}
        \resizebox{0.9\linewidth}{!}{

            \centering

            \begin{tikzpicture}
                % Nodes
                \node[draw, circle,ultra thick] (x3) {$X_3$};
                \node[draw, circle, below=of x3,ultra thick] (x2) {$X_2$};
                \node[draw, circle,  right=2cm of $(x3)!0.5!(x2)$3,ultra thick] (x4) {$X_4$};
                \node[draw, circle, left=of x2,ultra thick] (x1) {$X_1$};

                % Edges
                \draw[->, ultra thick] (x1) -- (x2);
                \draw[->,ultra thick] (x2) -- (x4);
                \draw[->, ultra thick] (x3) -- (x4);
            \end{tikzpicture}
        }
    \end{minipage}
    \caption{Right: Bayesian network. Left: partial order relations that hold.}
    \label{fig:bn}
\end{figure}
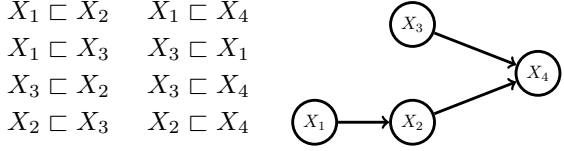

In the example above we denote $\parents(X_n)$ by $\Xvars_{\parents(n)}$. This will allow us to omit the random variable $\Xvar_n$ when writing down a probability distribution and only use the instantiation $\xvar_n$ instead.

\subsection{Deep Mixtures of Bayesian Networks}
\label{sec:dmbn}

\begin{definition}[Conditinal Probabilsitic Circuit (CPC)]
    \label{def:con:circuit}
    A CPC $\circuit_k$ over a poset $\poset(\Xvars)$ is a parametrized computational graph encoding a probability density function $\circuit(\Xvars{=}\xvars)$.
    The  CPC consists of
    \textit{leaf}, \textit{product}, and \textit{sum} units.
    Each unit $k$ encodes a function $\circuit_{k}$ as follows:
    \begin{align}
        \label{eq:con:circuit}
         & {\circuit}_k(\xvars_n \mid  \xvars_{\parents(n)})
        \\
         & =
        \begin{cases}
            f_k({\xvars_n}\mid   \xvars_{\parents(n)})                                          & \text{if leaf}    \\
            \circuit_{k_l}(\xvars_{n_l}\mid   \xvars_{\parents(n_l)})
            \circuit_{k_r} (\xvars_{n_r}\mid \xvars_{\parents(n_r)})                            & \text{if product} \\
            \sum_{j\in\inputs(k)} \weight_{kj} \circuit_j(\xvars_n  \mid  \xvars_{\parents(n)}) & \text{if sum}
        \end{cases}
        \nonumber
        % \label{eq:def:circuitconditional}
    \end{align}
    where $f_k(\xvars_n)$ denotes a parametrized probability distribution having as support the sample space of the random variables in $\Xvars_n$.
    % Furthermore, we necessitate that $p_n(\xvars_n){\geq} 0$ and $\int p_n(\xvars_n) \differential \xvars_n{=}1$ for every unit $n$.
\end{definition}

\begin{definition}[Scope (CPC)]
    \label{def:scope}
    The scope  $\scope(k)$ of a unit $k$ encoding the probability distribution $\circuit_k(\xvars_n\mid \xvars_{\parents(n)} )$ is the set of random variables $\Xvars_n$.
\end{definition}

\begin{corollary} A conditional probabilistic circuit over an unordered set of random variables is a (non-conditional) probability circuit.
\end{corollary}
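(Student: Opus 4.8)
The plan is to unpack what ``unordered set of random variables'' means in terms of the machinery of Section~\ref{sec:poset} and then check that the recursion in Definition~\ref{def:con:circuit} collapses onto the one in Definition~\ref{def:circuit}. Concretely, saying that $\Xvars$ is unordered means that the parents relationship $\parents(\cdot)$ is empty, i.e.\ $\parents(\Xvar_n){=}\emptyset$ for every $n$, so the directed acyclic graph induced on $\Xvars$ has no edges and the poset $\poset(\Xvars)$ carries no nontrivial relations. Consequently, for every scope $\Xvars_n$ appearing in the circuit, the conditioning set $\Xvars_{\parents(n)}$ is empty and its instantiation $\xvars_{\parents(n)}$ is the empty tuple.

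First I would record the elementary fact that conditioning on the empty set is vacuous: a distribution $\circuit_k(\xvars_n \mid \xvars_{\parents(n)})$ with $\parents(n){=}\emptyset$ is just an unconditional distribution $\circuit_k(\xvars_n)$ over $\Xvars_n$. This lets me rewrite each of the three cases of Equation~\eqref{eq:con:circuit} with the empty conditioning dropped, and it also ensures the base case (leaf units) is handled consistently.

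Then I would proceed case by case. For a leaf unit, $f_k(\xvars_n \mid \xvars_{\parents(n)})$ is precisely a parametrized distribution $f_k(\xvars_n)$ supported on the sample space of $\Xvars_n$, matching the leaf case of Definition~\ref{def:circuit}. For a product unit, since $\parents(n_l){=}\parents(n_r){=}\emptyset$, the factor $\circuit_{k_l}(\xvars_{n_l}\mid\xvars_{\parents(n_l)})\,\circuit_{k_r}(\xvars_{n_r}\mid\xvars_{\parents(n_r)})$ becomes $\circuit_{k_l}(\xvars_{n_l})\,\circuit_{k_r}(\xvars_{n_r})$, the product case. For a sum unit, the weighted combination $\sum_{j\in\inputs(k)} \weight_{kj}\,\circuit_j(\xvars_n\mid\xvars_{\parents(n)})$ reduces to $\sum_{j\in\inputs(k)} \weight_{kj}\,\circuit_j(\xvars_n)$, the sum case. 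Since Definition~\ref{def:scope} assigns the scope $\Xvars_n$ to a unit regardless of any conditioning, the scope of every unit is unchanged, so any smoothness or decomposability structure is preserved verbatim.

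The step most in need of care is not a computation but a definitional one: pinning down that ``unordered'' is exactly the empty-parents case, and confirming that the empty conditioning tuple is treated consistently throughout the recursion, including at the leaves. Once that is settled, the three cases line up one-to-one with Definition~\ref{def:circuit}, establishing that a CPC over an unordered $\Xvars$ is a (non-conditional) probabilistic circuit.
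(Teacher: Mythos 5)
Your proof is correct and follows essentially the same route as the paper: identify ``unordered'' with $\parents(\Xvar){=}\emptyset$ for every variable, observe that all conditioning sets in Definition~\ref{def:con:circuit} are then empty, and conclude that the recursion collapses to Definition~\ref{def:circuit}. The paper states this in two sentences; your case-by-case check of the leaf, product, and sum units is just a more explicit spelling-out of the same argument.
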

\begin{proof}
    Having no order means that $\parents(\Xvars)=\emptyset$ for every $\Xvar \in \Xvars$. This then means that the conditioning sets in Definition~\ref{def:con:circuit} are all empty and that we recover a (non-conditional) probabilistic circuit.
\end{proof}

We can now also introduce the notions of \textit{conditional smoothness} and \textit{conditional decomposability}:
\begin{definition}[Conditional Smoothness]
    A CPC is conditionally smooth if for every sum unit $k$ it holds that
    $\forall j_1, j_2 {\in} \inputs(k):  \scope(j_1){=}\scope(j_2) $
\end{definition}

\begin{definition}[Conditional Decomposability]
    A CPC is conditionally decomposable if for every product unit
    $k$ it holds that $\scope (k_l) \cap \scope (k_r) = \emptyset$ with $\{k_l, k_r\}= \inputs (k)$.
\end{definition}

\begin{corollary}
    A conditionally smooth and conditionally decomposable CPC is smooth and decomposable if its random variables $\Xvars$ are unordered.
\end{corollary}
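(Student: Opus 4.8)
The plan is to observe that this corollary is essentially an immediate consequence of the preceding corollary together with the structural equivalence of the conditional and non-conditional definitions once the poset is trivial. First I would invoke the earlier corollary, which establishes that a CPC over an unordered set of random variables \emph{is} a non-conditional probabilistic circuit: since an empty parents relationship forces every conditioning set $\xvars_{\parents(n)}$ to vanish, Definition~\ref{def:con:circuit} collapses onto Definition~\ref{def:circuit}. This means that for unordered $\Xvars$ there is a genuine identification of the two objects, not merely an analogy, and in particular the scope of each unit (Definition~\ref{def:scope}) coincides with the scope in the non-conditional setting (Definition~\ref{def:scope:cond}), because both are simply the set $\Xvars_n$.

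Next I would compare the two pairs of properties syntactically. The definitions of Conditional Smoothness and Smoothness are verbatim identical: both require $\forall j_1, j_2 \in \inputs(k): \scope(j_1) = \scope(j_2)$ for every sum unit $k$. Likewise, Conditional Decomposability and Decomposability both demand $\scope(k_l) \cap \scope(k_r) = \emptyset$ with $\{k_l, k_r\} = \inputs(k)$ for every product unit $k$. Since the scopes agree under the identification from the first step, a CPC satisfying the conditional versions of these properties, once reinterpreted as an ordinary PC, automatically satisfies the non-conditional versions. The proof therefore reduces to chaining these two trivial observations.

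The only step requiring any care — and what I would flag as the \emph{main obstacle}, though it is minor — is verifying that the scope notion transfers cleanly: I must confirm that Definition~\ref{def:scope} and Definition~\ref{def:scope:cond} assign the same set $\Xvars_n$ to each unit, so that the predicates appearing in the smoothness and decomposability conditions are literally the same predicates before and after dropping the (empty) conditioning sets. Once this is noted, no computation is needed. I would write the proof as a two-line deduction: apply the previous corollary to reduce to a non-conditional PC, then observe that the conditional and non-conditional smoothness/decomposability conditions coincide term-by-term under the equality of scopes, so the CPC is smooth and decomposable in the ordinary sense.
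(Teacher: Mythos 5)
Your argument is correct and is precisely the reasoning the paper leaves implicit: the corollary is stated without proof, being treated as an immediate consequence of the preceding corollary (empty conditioning sets collapse Definition~\ref{def:con:circuit} onto Definition~\ref{def:circuit}) together with the verbatim agreement of the conditional and non-conditional smoothness/decomposability predicates once the scopes coincide. Your explicit check that Definitions~\ref{def:scope} and~\ref{def:scope:cond} assign the same set $\Xvars_n$ to each unit is the right detail to flag, and nothing further is needed.
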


\begin{definition}[Valid CPC]
    \label{def:valid_cpc}
    We call a CPC valid if for every unit $k$ we have that $p_k(\xvars_n {\mid} \xvars_{\parents(n)}){\geq} 0$ and $\int p_k(\xvars_n {\mid} \xvars_{\parents(n)}) \differential \xvars_n{=}1$.
\end{definition}

\begin{theorem}[Validity for CPCs]
    A CPC is valid if it is conditionally smooth, conditionally decomposable, and for every sum unit $n$ it holds that $\weight_{nm}{\geq} 0$ and $\sum_{m\in\inputs(n)} \weight_{nm}{=}1$.
\end{theorem}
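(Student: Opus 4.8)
The plan is to establish validity by structural induction over the units of the CPC, proceeding bottom-up from the leaves to the root. By Definition~\ref{def:valid_cpc}, a unit $k$ with scope $\Xvars_n$ is valid precisely when, for every fixed instantiation $\xvars_{\parents(n)}$ of its conditioning variables, we have $\circuit_k(\xvars_n \mid \xvars_{\parents(n)}) \geq 0$ and $\int \circuit_k(\xvars_n \mid \xvars_{\parents(n)}) \differential \xvars_n = 1$. The induction hypothesis is that both properties hold for all input units feeding into $k$; since all integrands are non-negative, Tonelli's theorem will let me freely reorder the iterated integrals that arise.

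For the base case, a leaf unit encodes $f_k(\xvars_n \mid \xvars_{\parents(n)})$, which by Definition~\ref{def:con:circuit} is a parametrized probability distribution supported on the sample space of $\Xvars_n$, hence non-negative and normalized for each instantiation of its parents. For a sum unit, non-negativity follows immediately from $\weight_{kj} \geq 0$ together with the non-negativity of each input. Normalization is where I invoke conditional smoothness: it forces every input $j \in \inputs(k)$ to share the scope $\Xvars_n$, so $\int \sum_{j} \weight_{kj}\, \circuit_j(\xvars_n \mid \xvars_{\parents(n)}) \differential \xvars_n$ is a single integral over a common variable set. Exchanging the finite sum with the integral and applying the hypothesis $\int \circuit_j \differential \xvars_n = 1$ collapses the expression to $\sum_{j \in \inputs(k)} \weight_{kj} = 1$ by the weight constraint.

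The product unit is where the essential difficulty lies. Conditional decomposability gives $\scope(k_l) \cap \scope(k_r) = \emptyset$, so $\Xvars_n = \Xvars_{n_l} \cup \Xvars_{n_r}$ is a disjoint union and non-negativity is clear. For normalization I must show that $\int \circuit_{k_l}(\xvars_{n_l} \mid \xvars_{\parents(n_l)})\, \circuit_{k_r}(\xvars_{n_r} \mid \xvars_{\parents(n_r)}) \differential \xvars_n = 1$, and here the factors do not separate naively: a variable in $\Xvars_{n_r}$ may occur in the conditioning set $\xvars_{\parents(n_l)}$ of the left factor, coupling the two. The crux is that, because $\circuit_k$ is defined over the poset $\poset(\Xvars)$ and $\parents(\cdot)$ induces a DAG, the decomposition is order-compatible: one scope precedes the other, say $\Xvars_{n_r} \sqsubset \Xvars_{n_l}$, so that no variable of $\Xvars_{n_l}$ is an ancestor---and in particular a parent---of any variable of $\Xvars_{n_r}$, giving $\parents(n_r) \cap \Xvars_{n_l} = \emptyset$. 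I would then integrate out $\Xvars_{n_l}$ first while holding $\xvars_{n_r}$ fixed; its conditioning values are then all instantiated, so the hypothesis yields $\int \circuit_{k_l}(\xvars_{n_l} \mid \xvars_{\parents(n_l)}) \differential \xvars_{n_l} = 1$, and the remaining $\int \circuit_{k_r}(\xvars_{n_r} \mid \xvars_{\parents(n_r)}) \differential \xvars_{n_r} = 1$ holds by the hypothesis on the right child, whose conditioning set no longer references the already-eliminated $\Xvars_{n_l}$. The main obstacle is precisely this point: one must argue that acyclicity of the poset always supplies such an admissible integration order and rule out the cyclic configuration in which each child conditions on the other's scope, which would indeed break normalization. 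Granting the order-compatibility of the decomposition, the three cases close the induction, so every unit---including the root---is valid.
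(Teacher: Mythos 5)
Your proof is correct, but it takes a genuinely different route from the paper's. The paper does not induct over units: it expands the whole circuit into its flat representation $\circuit_k(\xvars_n)=\sum_{\tau\in\mathcal{T}}\weight_\tau\prod_{k:f_k\in\tau}f_k(\xvars_n\mid\xvars_{\parents(n)})$ (citing \citet{zhao2016unified}), uses conditional decomposability to argue that each variable picks up exactly one leaf factor in every induced product, so that each product term is a Bayesian-network-style factorization and hence a normalized joint; conditional smoothness guarantees all terms range over the same variables, and the normalization of the mixture weights $\weight_\tau$ is imported from \citet{peharz2015theoretical}. Your structural induction is more elementary and self-contained---it needs no external results on induced trees or weight normalization, and it localizes exactly where each hypothesis is used (smoothness at sum units, decomposability at product units)---whereas the paper's global expansion buys the ``deep mixture of Bayesian networks'' reading of Equation~\ref{eq:f_tree_decomp_PCP} that the text exploits immediately afterwards. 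Notably, the obstacle you flag at the product unit---that normalization of $\int\circuit_{k_l}\circuit_{k_r}\,\differential\xvars_n$ requires the two child scopes to be comparable under $\sqsubset$ so that an admissible integration order exists, and that an interleaved configuration (each child's conditioning set reaching into the other's scope in both directions) would break it---is exactly the point the paper's proof passes over silently in the sentence asserting that the product of the $f_k$ ``forms a joint probability distribution as well''; there the same fact is needed (acyclicity of $\parents(\cdot)$ supplies a reverse topological order in which to integrate the leaf factors one at a time), and neither the theorem statement nor the paper's argument makes this order-compatibility of the decomposition explicit. Your version has the virtue of naming the assumption; granting it, both proofs go through.
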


\begin{proof}
    We start by rewriting the alternating sums and products of a CPC in its flat representation using the fact that products distribute over summations:
    \begin{align}
        \circuit_k(\xvars_n)
         & =  \sum_{\tau \in \mathcal{T}} \weight_\tau  \prod_{\rho \in \tau} \rho (\xvars_\rho)
        \label{eq:tree_decomp_PCP}
    \end{align}
    Here, $\mathcal{T}$ denotes the set of all products of leaf distributions in the CPC and $\rho \in \tau$ denotes a factor in one of these products
    (we refer to \citep{zhao2016unified} for a more detailed account).

    Invoking decomposability of the product units we have that each random variable $X_n$ only picks up a single factor $\rho (\xvar_\rho)$, which means that we can identify each $\rho (\xvar_\rho)$ with a specific $f_k(\xvar_n\mid \xvars_{\parents(n)} )$. This lets us rewrite Equation~\ref{eq:tree_decomp_PCP} as:
    \begin{align}
        \circuit_k(\xvars_n)
         & =  \sum_{\tau \in \mathcal{T}} \weight_\tau  \prod_{k: f_k \in \tau} f_k (\xvars_n \mid \xvars_{\parents(n)})
        \label{eq:f_tree_decomp_PCP}
    \end{align}
    Given that the $f_k$ are by definition (conditional) probability distributions their product forms a joint probability distribution as well. Next, we exploit smoothness, which states that the inputs to sum units have identical scope. This means that all the terms in the flat representation of $\circuit (\Xvars_n)$ mention the identical set of random variables and each term in the flat representation forms indeed a joint probability over the random variables $\Xvars_n$.

    Lastly, \citet{peharz2015theoretical} have shown that having normalized weights in the sum units of a circuit results in normalized weights $\weight_\tau$ in the flat representation. This lets us conclude that the circuit $p_k(\xvars_n)$ is a probability distribution. Note that we did not make any reference in our reasoning to any specific unit in the circuit. This means that our argument holds for all units in a conditionally smooth and conditionally decomposable circuit with normalized weights, which means in turn that such a circuit is valid.
\end{proof}

In light of Equation~\ref{eq:f_tree_decomp_PCP} and comparing it to Equation~\ref{eq:bn}, we can interpret CPCs as deep (or hierarchical) mixtures of Bayesian networks. This is analogous to interpreting probabilistic circuits as deep mixtures of fully factorized distributions~\citep{poon2011sum}.

\subsection{PNCs and Their Tractable Queries}
\label{sec:pnc}

The computational efficiency of probabilistic circuits stems from the fact the circuits evaluations are broken down into sub-evaluations, which are then cached and reused.
Inspecting, however, the functional form of the sum units in a CPC (\cf Equation~\ref{eq:con:circuit}), this is not the case:
each term in the sum over $j$ requires a separate conditional probability for each instantiation of the variables $\Xvars_{\parents(n)}$.
This means that we would need (assuming binary random variables) $2^{|\Xvars_{\parents(n)}|}$ functions to encode the conditional probabilities.
We alleviate this issue as follows. First, we rewrite the functional form of the sum units using Bayes' rule:
\begin{align}
    \sum_{j\in\inputs(k)}
    \weight_{kj}
    \frac{
        \circuit_j(  \xvars_{\parents(n)} \mid \xvars_n  )
    }{
        \circuit_j(  \xvars_{\parents(n)  })
    }
    \circuit_j(  \xvars_n)
\end{align}
Second, we make the following approximation:
\begin{align}
    \weight_{kj}
    \frac{
        \circuit_j(  \xvars_{\parents(n)} \mid \xvars_n  )
    }{
        \circuit_j(  \xvars_{\parents(n)  })
    }
    \approx
    \nweight_{kj}(\xvars_{\ancestors(n)}),
\end{align}
where $\nweight_{kj}(\cdot)$ is a neural network depending on the set of ancestors $\ancestors(\xvar_n)$. This now allows us to formally introduce probabilistic neural circuits.
\begin{definition}[Probabilistic Neural Circuit (PNC)]
    \label{def:pnc}
    A PNC is a conditionally smooth and conditionally decomposable  CPC where sum units take the following functional form:
    \begin{talign}
        {\circuit}_k(\xvars_n \mid  \xvars_{\parents(n)})
        =
        \sum_{j\in\inputs(k)} \nweight_{kj} ( \xvars_{\ancestors(n)}) \circuit_j(\xvars_n),
        \label{eq:pnc}
    \end{talign}
    with $\nweight_{kj}: \Omega( \ancestors(\xvar_n)) \rightarrow [0,1 ]$ being a neural network mapping from the sample space of the random variables $\ancestors(\xvar_n)$ to a real  between zero and one, and where it holds that $\sum_{j\in\inputs(k)} \nweight_{kj} = 1$.
\end{definition}
It can easily be seen that PNCs, already encode valid circuits (\cf. Definition~\ref{def:valid_cpc}) by construction.
Intuitively, we interpret PNCs as neural approximations of CPCs.
Note that it is this approximation that makes PNCs tractable: PNCs only need a single circuits for each $j$ while CPCs need a circuit for every $j$ and every instantiation of $\xvars_{\parents(n)}$.

\begin{proposition}
    Probabilistic circuits are PNCs.
\end{proposition}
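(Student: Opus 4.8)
The plan is to show that any probabilistic circuit which is smooth, decomposable, and normalized (hence valid) can be recovered as a special case of a PNC, namely the one obtained when the underlying poset is trivial. The starting observation is that a probabilistic circuit in the sense of Definition~\ref{def:circuit} carries no ordering on its random variables, so I would first equip $\Xvars$ with the empty $\parents(\cdot)$ relation, giving $\parents(\Xvar){=}\emptyset$ and consequently $\ancestors(\Xvar){=}\emptyset$ for every $\Xvar{\in}\Xvars$.

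From here the argument is essentially a collapse of the PNC sum-unit form in Equation~\ref{eq:pnc}. First I would note that, since every conditioning set $\xvars_{\parents(n)}$ is empty, the conditional distributions $\circuit_k(\xvars_n \mid \xvars_{\parents(n)})$ of Definition~\ref{def:pnc} reduce to ordinary unconditional distributions $\circuit_k(\xvars_n)$ at leaf, product, and sum units alike; this is exactly the content of the corollary stating that a CPC over an unordered set of variables is a non-conditional PC. Next, because $\ancestors(n){=}\emptyset$, each neural weight $\nweight_{kj}(\xvars_{\ancestors(n)})$ has an empty input domain: its argument ranges over the trivial one-point sample space $\Omega(\emptyset)$, so the network necessarily computes a constant value, which I would identify with the scalar weight $\weight_{kj}$ of the corresponding sum unit.

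It then remains to match the side conditions. The PNC requires $\nweight_{kj}{\in}[0,1]$ and $\sum_{j\in\inputs(k)}\nweight_{kj}{=}1$; for a valid probabilistic circuit the weights already satisfy $\weight_{kj}{\geq}0$ and $\sum_{j}\weight_{kj}{=}1$, so in particular each lies in $[0,1]$ and can be realized by a network through a bias term. Finally, conditional smoothness and conditional decomposability coincide with their unconditional counterparts when the variables are unordered, by the relevant corollary, so a smooth and decomposable PC already meets the structural requirements of Definition~\ref{def:pnc}. Assembling these observations, Equation~\ref{eq:pnc} becomes $\circuit_k(\xvars_n){=}\sum_{j}\weight_{kj}\circuit_j(\xvars_n)$, which is precisely the sum-unit rule of Definition~\ref{def:circuit}, establishing the inclusion.

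I expect the only genuine subtlety to be the domain-of-the-network point: making precise that a neural network whose input is drawn from the empty set of ancestors degenerates to a constant, and hence that the input-dependent mixing weights of a PNC specialize to the fixed mixing weights of a PC. Everything else is bookkeeping that follows directly from the two corollaries on unordered CPCs. A secondary point worth stating explicitly is the implicit restriction to normalized probabilistic circuits, since the bound $\nweight_{kj}{\in}[0,1]$ would otherwise not be guaranteed.
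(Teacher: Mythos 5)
Your proposal is correct and takes essentially the same route as the paper: the paper's proof simply observes that when the neural weights $\nweight_{kj}(\xvars_{\ancestors(n)})$ are constant, the PNC sum unit collapses to the ordinary PC sum unit, which is exactly the collapse you perform (you just derive the constancy from the empty poset rather than stipulating it, and add the harmless remark about normalized weights). No gap; your version is, if anything, slightly more explicit than the paper's.
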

\begin{proof}
    If the values $\nweight(\xvars_{\ancestors(n)})$ do not depend on $\xvars_{\ancestors(n)}$ we have $|\inputs(k)|$ constants that sum up to $1$. This means that the weights in the sum units do not depend on the conditioning sets from the antecedent product layer, and we can omit any conditioning sets. The definition of a PNC in this case is then equivalent to the definition of a probabilistic circuit.
\end{proof}

Given the definition of PNCs we can now determine tractable queries that we can perform.

\begin{proposition}[Density Evaluation]
    Given a probabilistic circuit $p_k$ over the random variables $\Xvars$. We can evaluate the circuit  at the instantiation $\xvars$ in linear time with respect to the size of the circuit.
\end{proposition}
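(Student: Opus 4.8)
The plan is to exhibit a single bottom-up evaluation pass over the computational graph and to argue that its total cost is proportional to the number of edges. Since a probabilistic circuit is by definition a directed acyclic graph of leaf, product, and sum units, its units admit a topological ordering in which every unit $k$ appears after all the units in $\inputs(k)$ from which it draws its inputs. First I would fix such an ordering and process the units in sequence, storing the scalar value $\circuit_k(\xvars)$ of each unit $k$ as soon as it is computed.

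Next I would bound the work done at each unit using the three cases of Definition~\ref{def:circuit}. Evaluating a leaf unit amounts to a single evaluation of the parametrized distribution $f_k(\xvars_n)$, which we take to be a constant-time operation. A product unit multiplies the two already-cached input values $\circuit_{k_l}(\xvars_{n_l})$ and $\circuit_{k_r}(\xvars_{n_r})$, again constant time. A sum unit computes $\sum_{j \in \inputs(k)} \weight_{kj}\,\circuit_j(\xvars_n)$, costing an amount proportional to $|\inputs(k)|$, its number of incoming edges, since every summand reads a value that is already stored.

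I would then sum these per-unit costs over the whole circuit. The combined cost of the leaf and product units is proportional to their number, while the combined cost of the sum units is proportional to $\sum_k |\inputs(k)|$, i.e.\ to the number of edges entering sum units. Hence the overall running time is proportional to the number of units plus the number of edges, which is exactly the size of the circuit, yielding the claimed linear bound.

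The step I expect to require the most care is justifying that each unit is evaluated exactly once. Because the circuit is a DAG rather than a tree, a sub-circuit may feed into several parents, and a naive recursive traversal could re-evaluate shared sub-circuits and incur exponential cost. The key observation -- and the reason the result holds -- is precisely the caching of sub-evaluations noted at the start of this subsection: processing units in topological order and memoizing each $\circuit_k(\xvars)$ guarantees that every value is computed once and thereafter merely read, so no edge is traversed more than once. I would make this explicit in order to rule out the exponential blow-up, and remark that the identical accounting carries over to the PNC setting, where one additionally charges each sum unit for the evaluation of its neural weights $\nweight_{kj}(\xvars_{\ancestors(n)})$.
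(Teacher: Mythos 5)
Your proposal is correct and follows essentially the same route as the paper, whose one-line proof simply observes that the circuit is a non-recurrent computation graph evaluated by computing input units before output units --- i.e.\ exactly your topological-order pass with memoization. You merely spell out the per-unit cost accounting and the caching argument that the paper leaves implicit.
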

\begin{proof}
    This follows simply from the fact that a circuit is a (non-recurrent) computation graph and that we can simply evaluate it by computing input units before output units.
\end{proof}

\begin{proposition}[Ordered Marginals]
    \label{prop:ord_marg}
    Consider a PNC $\circuit_k(\xvars_m, \xvars_e)$  we can then compute the marginal $\circuit_k(\xvars_e)$ in polynomial time if $\Xvars_e {\sqsubset} \Xvars_m$.
\end{proposition}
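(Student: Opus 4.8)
The plan is to marginalize out the variables $\Xvars_m$ one at a time, at each step peeling off a variable that is maximal with respect to the ancestor order, and to argue that every such single-variable integration is an ordinary smooth-and-decomposable marginalization in which the neural weights behave as constants. First I would translate the hypothesis $\Xvars_e {\sqsubset} \Xvars_m$ into a structural statement about the poset. Since $\Xvars {=} \Xvars_e \cup \Xvars_m$ and no variable of $\Xvars_m$ is an ancestor of any variable of $\Xvars_e$, we get $\ancestors(\Xvars_e) \subseteq \Xvars_e$; dually, $\Xvars_m$ is closed under taking descendants. Consequently every maximal element of $\Xvars_m$ is in fact a sink of the whole remaining poset: it has no descendants at all, neither in $\Xvars_m$ nor in $\Xvars_e$.

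Next, I would fix such a sink $\Xvar$ and push the integral $\int \cdot \, \differential \xvar$ through the circuit from the root toward the leaves. The single observation that drives the argument is that a weight $\nweight_{kj}(\xvars_{\ancestors(n)})$ depends on $\Xvar$ only if $\Xvar$ is an ancestor of some variable in $\scope(k)$; since $\Xvar$ is a sink it is an ancestor of nothing, so no weight anywhere in the circuit depends on $\Xvar$. For the same reason a leaf $f_k$ whose scope does not contain $\Xvar$ cannot be conditioned on $\Xvar$ (that would make $\Xvar$ an ancestor of its scope). Hence $\Xvar$ occurs only inside the leaf factors whose scope actually contains it.

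With that in hand the push-down is precisely the classical smooth-and-decomposable marginalization: at a product unit, decomposability routes $\int \cdot \, \differential \xvar$ into the unique factor whose scope contains $\Xvar$; at a sum unit the weight is constant in $\Xvar$ and factors out, leaving $\sum_{j\in\inputs(k)} \nweight_{kj}(\xvars_{\ancestors(n)}) \int \circuit_j \, \differential \xvar$; and at a leaf we integrate the (tractable) input distribution over $\Xvar$, which yields $1$ when $\Xvar$ is that leaf's only scope variable. The output is again a valid PNC, now over $\Xvars \setminus \{\Xvar\}$, with $\Xvars_m \setminus \{\Xvar\}$ still closed under descendants, so the peeling never stalls. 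Iterating over all of $\Xvars_m$ computes $\circuit_k(\xvars_e)$ in $|\Xvars_m|$ passes, each linear in the circuit size by the Density Evaluation proposition, which gives the claimed polynomial bound.

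I expect the main obstacle to be the sum-unit step, and in particular pinpointing exactly why the hypothesis is needed there: the neural weights are the one ingredient a PNC does not share with an ordinary PC, and the marginal is tractable precisely when no surviving weight depends on an integrated-out variable. The cleanest way I see to guarantee this uniformly is the maximal-element ordering above, which reduces the dependence question to the triviality that sinks are ancestors of nothing; the role of $\Xvars_e {\sqsubset} \Xvars_m$ is then solely to ensure that a sink of the remaining poset always lies in $\Xvars_m$. An alternative I would keep in reserve is the flat deep-mixture-of-Bayesian-networks representation of Equation~\ref{eq:f_tree_decomp_PCP}, arguing that each weight product is $\Xvars_m$-independent and that each constituent network marginalizes over $\Xvars_m$ to its restriction on $\Xvars_e$; but that route forces one to reason about mixed-scope sum units directly, which is exactly the subtlety the ordering avoids.
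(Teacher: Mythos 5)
Your proposal is correct and follows essentially the same route as the paper: order $\Xvars_m$ so that a maximal (sink) element is integrated out first, push the integral through product units by decomposability and through sum units because no neural weight can depend on a sink, collapse the corresponding leaf to $1$, and iterate. The one step you leave implicit---and which the paper singles out as the delicate point---is that the marginalized sub-circuit evaluates to the constant $1$ via $\sum_{j\in\inputs(k)} \nweight_{kj}(\cdot) = 1$, so the surviving factor of the parent product unit retains no residual dependence on $\xvars_{\parents(\mu)}$ (which may include variables still to be integrated); this is precisely what your claim that ``the output is again a valid PNC over $\Xvars\setminus\{\Xvar\}$'' requires.
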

\begin{proof}
    We start by writing out explicitly the single elements in the set $\xvars_m$:
    \begin{align}
        \circuit_k(\xvars_m, \xvars_e) = \circuit_k(\xvar_\mu, \dots, \xvar_1, \xvars_e)
    \end{align}
    where the order of $\{ \xvar_\mu, \dots, \xvar_1\}= \xvars_m$ is arbitrary but respects the partial order $\poset(\Xvars_m \cup \Xvars_e )$.
    Given that the circuit is conditionally smooth and conditionally decomposable, we know that it encodes a proper probability distribution over its variables.
    We can hence obtain the marginal $\circuit(\xvars_e)$ by integrating over the possible values $\xvars_m$:
    \begin{align}
        \circuit_k(\xvars_e)
        =
        \int \cdots \int p_k(\xvar_\mu, \dots, \xvar_1,\xvars_e)  \differential  \xvar_\mu \cdots \differential \xvar_1
    \end{align}
    As $\xvar_\mu$ does not appear in any of the conditioning sets and as any product unit $q$ is decomposable we can simply push the summation to the input unit $r$ of $q$ for which we have  $\Xvar_\mu \in \scope (r)$.
    For summation units we exploit the linearity of the integral and distribute the integral over the terms in the sum.
    Performing this recursively brings the integral to the leaves where we have  $ \int f_i(x_\mu \mid x_{\parents(\mu)} ) \differential \xvar_\mu =1$.

    Up to this point the marginalization in CPC is identical to marginalization in probabilistic circuits. Contrary, to probabilistic circuits, however, we now need to propagate back up the result of the marginalization. Assuming, without loss of generality that leaf units feed into sum units, we then have
    \begin{align*}
         & \sum_{g\in\inputs(h)} \nweight_{hg} (\xvars_{\ancestors( \mu)})
        \int \circuit_g (\xvar_\mu  ) \differential \xvar_\mu
        {=} \sum_{g\in\inputs(h)} \nweight_{hg} (\xvars_{\ancestors( \mu)})
        {=}1
    \end{align*}
    The next product node that we encounter on our way up through the circuit is of the form:
    \begin{align}
         & \circuit_{l}(\xvars_{\mu-1}\mid   \xvars_{\parents(\mu-1)})
        \int  \circuit_{r} (\xvars_{\mu}\mid \xvars_{\parents(\mu)}) \differential \xvars_\mu
        \nonumber
        \\
         & =\circuit_{l}(\xvars_{\mu-1}\mid  \xvars_{\parents(\mu-1)})
        \nonumber
    \end{align}
    At this point we have integrated out the variable $\xvar_\mu$ from the circuit by traversing a number of units linear in the size of the circuit (assuming proper caching~\citep{vergari2021compositional}). Repeating this procedure for the remaining set of ordered random variables $\{\Xvar_{\mu-1},\dots,\Xvar_1 \}$ gives us the distribution $\circuit_k(\xvars_e)$ in polynomial time.
\end{proof}
The proof follows a similar reasoning to the case of probabilistic circuits. The delicate point was to show that in the product unit one of the factors drops out. This is important as $\Xvar_{\parents(\mu)}$ might include $\Xvar_{\mu-1}$. Retaining such a dependency would prevent us from performing tractable ordered marginalization.

\begin{corollary}[Ordered Conditionals]
    Assuming a PNC $\circuit_k (\xvars_m, \xvars_o, \xvars_e)$ where $\Xvars_e \sqsubset \Xvars_o \sqsubset \Xvars_m$ holds lets us compute the conditional $p(\xvars_o \mid \xvars_e)$ in polynomial time.
\end{corollary}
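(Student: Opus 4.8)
The plan is to reduce the conditional to a ratio of two ordered marginals, each of which is already tractable by Proposition~\ref{prop:ord_marg}, and then bound the total cost by the sum of the two marginalizations plus a single division. Concretely, for any instantiation at which $\circuit_k(\xvars_e) > 0$ I would write $\circuit_k(\xvars_o \mid \xvars_e) = \circuit_k(\xvars_o, \xvars_e) \,/\, \circuit_k(\xvars_e)$ and compute the numerator and denominator separately as marginals of the full PNC $\circuit_k(\xvars_m, \xvars_o, \xvars_e)$. For the numerator I would marginalize out $\Xvars_m$ only, keeping $\Xvars_o \cup \Xvars_e$; here Proposition~\ref{prop:ord_marg} applies with $\Xvars_m$ in the role of the integrated variables and $\Xvars_o \cup \Xvars_e$ in the role of the retained variables, so its hypothesis reads $(\Xvars_o \cup \Xvars_e) \sqsubset \Xvars_m$. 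For the denominator I would marginalize out the entire block $\Xvars_m \cup \Xvars_o$, keeping only $\Xvars_e$, which requires $\Xvars_e \sqsubset (\Xvars_m \cup \Xvars_o)$. Both marginals are polynomial-time by Proposition~\ref{prop:ord_marg}, and the ratio is a single arithmetic step, so the overall cost remains polynomial in the circuit size.

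The step that needs care is discharging these two hypotheses from the assumed chain $\Xvars_e \sqsubset \Xvars_o \sqsubset \Xvars_m$. Directly from the definition of $\sqsubset$, a set relation against a union splits conjunctively, because the universal quantifier over the union simply decomposes: $(\Xvars_o \cup \Xvars_e) \sqsubset \Xvars_m$ is equivalent to $\Xvars_o \sqsubset \Xvars_m$ together with $\Xvars_e \sqsubset \Xvars_m$, and $\Xvars_e \sqsubset (\Xvars_m \cup \Xvars_o)$ is equivalent to $\Xvars_e \sqsubset \Xvars_m$ together with $\Xvars_e \sqsubset \Xvars_o$. Thus both preconditions reduce to the three pairwise relations $\Xvars_o \sqsubset \Xvars_m$, $\Xvars_e \sqsubset \Xvars_o$, and $\Xvars_e \sqsubset \Xvars_m$. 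The first two are read off the chain directly.

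The hard part will be the third, ``skip'' relation $\Xvars_e \sqsubset \Xvars_m$, since $\sqsubset$ is not transitive in general (two consecutive non-ancestor relations need not compose, as a witnessing directed path need not pass through the middle block $\Xvars_o$). I would handle this by reading the chain $\Xvars_e \sqsubset \Xvars_o \sqsubset \Xvars_m$ as asserting \emph{all} pairwise comparabilities among the three blocks, which is the standard meaning of a chain and the reading under which the statement is intended; this supplies $\Xvars_e \sqsubset \Xvars_m$ as a hypothesis rather than requiring it to be derived. With all three relations in hand, both marginalizations are licensed by Proposition~\ref{prop:ord_marg}, the ratio is well defined wherever $\circuit_k(\xvars_e) > 0$, and the polynomial-time claim follows from carrying out the two marginalizations on the same PNC and dividing.
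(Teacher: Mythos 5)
Your proposal is correct and follows essentially the same route as the paper: express $p(\xvars_o \mid \xvars_e)$ as the ratio of the two ordered marginals $\int p(\xvars_m,\xvars_o,\xvars_e)\,\differential\xvars_m$ and $\int p(\xvars_m,\xvars_o,\xvars_e)\,\differential\xvars_m\differential\xvars_o$ and invoke Proposition~\ref{prop:ord_marg} for each. Your additional bookkeeping — splitting $(\Xvars_o\cup\Xvars_e)\sqsubset\Xvars_m$ and $\Xvars_e\sqsubset(\Xvars_m\cup\Xvars_o)$ into pairwise relations and noting that $\Xvars_e\sqsubset\Xvars_m$ must be read into the chain because $\sqsubset$ is not transitive — is a legitimate refinement that the paper's proof leaves implicit, but it does not change the argument.
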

\begin{proof}
    We first apply the definition of the conditional probability:
    $p(\xvars_o \mid \xvars_e)=\nicefrac{p(\xvars_o , \xvars_e)}{p( \xvars_e)}$.
    Using the law of total probability we rewrite the ratio as
    $$
        p(\xvars_o \mid \xvars_e)
        =
        \frac
        {
            \int p(\xvars_m , \xvars_o, \xvars_e) \differential \xvars_m
        }
        {
            \int p(\xvars_m , \xvars_o, \xvars_e) \differential \xvars_m \differential \xvars_o
        }
    $$
    and Proposition~\ref{prop:ord_marg} tells us that we can perform both marginalizations in polytime.
\end{proof}

% \begin{definition}[Conditional Determinism]

% \end{definition}

\section{Layered Probabilistic Neural Circuits}

While Equation~\ref{eq:pnc} provides a generic functional expression to compute the sum units in PNCs, it is not clear how to construct a PNC in the first place. That is, how do we link up the individual computation units such that they form a (valid) CPC.
For (non-conditional) probabilistic circuits potent structure learning algorithms have been developed in recent years  (\eg  hidden Chow-Liu trees~\citep{liu2021tractable} or random probabilistic circuits~\citep{di2021random}). It is not entirely clear how to adapt these to the setting of conditional probabilistic circuits. For this reason we study problems where, informed by the structure of the data itself, a structure for a PNC can be constructed. Concretely, we will study PNC structures tailored towards image data: features, \ie pixels, that are close to each other should also be close to each other in the circuit -- a fact already exploited by~\citet{poon2011sum}.

The structure we propose in this paper is rather simple and inspired by simple feed-forward neural networks and also layered (non-conditional) probabilistic circuits~\citep{peharz2020einsum}. More concretely,
the computation units in the current layer only depend on computation units in the previous layer. Furthermore, we wish the units within each layer to be computable in parallel (given the previous layer). To construct such a probabilistic neural circuit we take the circuit structure introduced by \citet{shih2021hyperspns} as a backbone and add additional edges to the computation graph in order to obtain a PNC from a probabilistic circuit.
For ease of exposition we detail our approach using one-dimensional data instead of two-dimensional data.

\subsection{Structure for One-Dimensional Data}

In order to study PNC structures, we introduce the concept of a partition graph, which is a hypergraph of a probabilistic circuit with the partitions of a circuit being the nodes and edges encoding the sub-partitions\footnote{In the circuit structure introduced by ~\citet{shih2021hyperspns}, all the product nodes decompose in the same fashion, \ie for product nodes with the same scope it is the sames variables that come from the left and right inputs, respectively. This is also called structured decomposabiltiy~\citep{darwiche2011sdd}. Partition trees are related to the concept of \textit{variable trees} in \textit{(probabilistic) sentential decision diagrams}~\citep{darwiche2011sdd,kisa2014probabilistic}. However, nodes in a variable tree do not constitute abstractions of a specific group of sum and product units and are a more general concept. }.  We give an example of a partition tree in Figure~\ref{fig:partition_graph}.

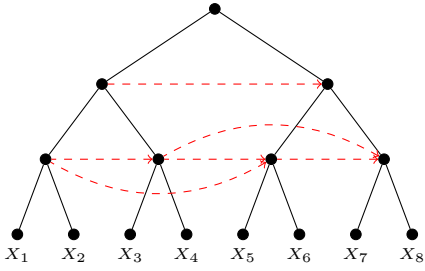
\begin{figure}[t]
    \begin{minipage}[c]{\linewidth}
        \centering

        \begin{tikzpicture}[
                dot/.style = {circle, draw, fill, minimum size=4pt, inner sep=0pt},
                level distance=1cm,
                level 1/.style={sibling distance=3cm},
                level 2/.style={sibling distance=1.5cm},
                level 3/.style={sibling distance=0.75cm},
                scale=1
            ]

            \node[dot] (00) {}
            child {node[dot] (10) {}
                    child {node[dot] (20) {}
                            child {node[dot] (30) {}}
                            child {node[dot] (31) {}}
                        }
                    child {node[dot] (21) {}
                            child {node[dot] (32) {}}
                            child {node[dot] (33) {}}
                        }
                }
            child {node[dot] (11) {}
                    child {node[dot] (22) {}
                            child {node[dot] (34) {}}
                            child {node[dot] (35) {}}
                        }
                    child {node[dot] (23) {}
                            child {node[dot] (36) {}}
                            child {node[dot] (37) {}}
                        }
                };

            \node[below= 0.1 of 30, inner sep=0pt] (X1) {\tiny{$X_1$}};
            \node[below= 0.1 of 31, inner sep=0pt] (x2) {\tiny{$X_2$}};
            \node[below= 0.1 of 32, inner sep=0pt] (x3) {\tiny{$X_3$}};
            \node[below= 0.1 of 33, inner sep=0pt] (x4) {\tiny{$X_4$}};
            \node[below= 0.1 of 34, inner sep=0pt] (X5) {\tiny{$X_5$}};
            \node[below= 0.1 of 35, inner sep=0pt] (X6) {\tiny{$X_6$}};
            \node[below= 0.1 of 36, inner sep=0pt] (X7) {\tiny{$X_7$}};
            \node[below= 0.1 of 37, inner sep=0pt] (X8) {\tiny{$X_8$}};

            \draw[<-, dashed, red] (11) -- (10);
            \draw[<-, dashed, red] (23) -- (22);
            \draw[<-, dashed, red] (22) -- (21);
            \draw[<-, dashed, red] (21) -- (20);
            \draw[<-, dashed, red, out=135] (23) to[out=150,in=30]  (21);
            \draw[<-, dashed, red, out=135] (22) to[out=-150,in=-30]  (20);

            % \draw[->, dashed, red] (37) -- (36);
            % \draw[->, dashed, red] (36) -- (35);
            % \draw[->, dashed, red] (35) -- (34);
            % \draw[->, dashed, red] (34) -- (33);
            % \draw[->, dashed, red] (33) -- (32);
            % \draw[->, dashed, red] (32) -- (31);
            % \draw[->, dashed, red] (31) -- (30);

            % \draw[->, dashed, red] (37) to[out=150,in=30] (35);
            % \draw[->, dashed, red] (36) to[out=-150,in=-30] (34);

            % \draw[->, dashed, red] (35) to[out=150,in=30] (33);
            % \draw[->, dashed, red] (34) to[out=-150,in=-30] (32);

            % \draw[->, dashed, red] (33) to[out=150,in=30] (31);
            % \draw[->, dashed, red] (32) to[out=-150,in=-30] (30);

        \end{tikzpicture}
    \end{minipage}
    \caption{A balanced partition tree of a probabilistic neural circuit with eight variables. The partition tree (in black) describes how the variables decompose (in terms of the scope function). The edges in red indicate functional (neural) dependencies between partitions.}
    \label{fig:partition_graph}
\end{figure}

Ignoring for now neural dependencies in PNCs (\ie the red edges in the partition tree in Figure~\ref{fig:partition_graph}) we describe the layer-wise operations. Let us assume, for the sake of simplicity, that the number of variables $N$ is a power of $2$. For instance, the circuits in Figure~\ref{fig:circuit} has $N=4$ and the partition diagram in Figure~\ref{fig:partition_graph} has $N=8$. Given that we merge partitions pairwise at each layer via multiplication, we obtain for the number of layers in a circuit $N_L= \log_2 N+1$. Formally, we express the layer-wise product units as follows:
\begin{talign}
    \component_{l,p,1,c} = \component_{l-1,2(p-1)+1,2,c} \times \component_{l-1,2(p-1)+2,2,c},
    \tag{ProductLayer}
\end{talign}
which describes how to compute the value of a component given the components of the previous layer.
The meaning of the indices is described in Section~\ref{sec:prelim}. Using this notation we can also express the sum units at the leaves and at the root:
\begin{talign}
    \component_{1,p,2,c}
     & =
    \sum_{c'=1}^{N_D} \weight_{1,p,c,c'} \times \component_{1,p,1,c'}
    \tag{SumLeaf}
    \\
    \component_{N_L,1,2,1}
     & =
    \sum_{c'=1}^{N_C} \weight_{N_L,1,1,c'} \times \component_{N_L,1,1,c'}
    \tag{SumRoot}
\end{talign}
In the equations above, $N_D$ denotes the number of initial components in the leaves and $N_C$ is the number of components throughout the circuits.
For the circuit in Figure~\ref{fig:circuit} we have $N_D{=}2$ and $N_C{=}3$.
The weights $w_{l,p,c.c'}$ are real valued constants and normalized over the $c'$ dimension. Note that, in contrast to the formulation in Section~\ref{sec:cpc},
weights (and also computation units in general) are not identified by a single index (\eg $k$ in $\circuit_k(\cdot)$) but by four indices.
To make this distinction explicit we denote the computation units by $\kappa_{l,p,i,c}$ instead of $p_k(\cdot)$.
Observe also that the root layer has only a single component. Hence, the $1$ as the last index instead of $c$.

Next, we describe the neural sum units in layered PNCs. To gain some intuition, consider the partial computation graph in Figure~\ref{fig:neural_dependencies}, where we show in more detail, compared to Figures ~\ref{fig:circuit} and~\ref{fig:partition_graph}, the neural dependencies present at the sum units. Formally, we express the values of neural sum units as follows:
\begin{talign}
     & \component_{l,p,2,c}     \tag{NeuralSumLayer}
    \\
     & =
    \sum_{c'=1}^{N_C} \nweight_{l,p,c,c'}(\component_{l, p-\nu_{l,p}:p-1, 1, 1:N_C})
    \times
    \component_{l,p,1,c'}
    \nonumber
\end{talign}
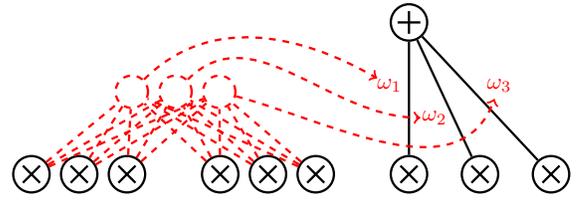
\begin{figure}[t]
    \begin{minipage}[c]{\linewidth}
        \centering
        \resizebox{0.9\columnwidth}{!}{
            \begin{tikzpicture}

                \prodnode[line width=\midlinewidth]{p21};
                \prodnode[line width=\midlinewidth, right=\halfdist of p21]{p22};
                \prodnode[line width=\midlinewidth, right=\halfdist of p22]{p23};

                \prodnode[line width=\midlinewidth, right=2\halfdist of p23]{p31};
                \prodnode[line width=\midlinewidth, right=\halfdist of p31]{p32};
                \prodnode[line width=\midlinewidth, right=\halfdist of p32]{p33};

                \node[draw, dashed, draw=red, circle, above right=0.875 and 0.35 of p23, inner sep=1pt,line width=\midlinewidth, minimum width=0.5cm] (w2){};
                \node[draw, dashed, draw=red,circle, left= \halfdist of w2, inner sep=1pt,line width=\midlinewidth,minimum width=0.5cm] (w1) {};
                \node[draw, dashed, draw=red,circle, right= \halfdist of w2, inner sep=1pt,line width=\midlinewidth,minimum width=0.5cm] (w3) {};

                \draw[-,red,dashed,line width=\midlinewidth] (p21) to (w1);
                \draw[-,red,dashed,line width=\midlinewidth] (p21) to (w2);
                \draw[-,red,dashed,line width=\midlinewidth] (p21) to (w3);
                \draw[-,red,dashed,line width=\midlinewidth] (p22) to (w1);
                \draw[-,red,dashed,line width=\midlinewidth] (p22) to (w2);
                \draw[-,red,dashed,line width=\midlinewidth] (p22) to (w3);
                \draw[-,red,dashed,line width=\midlinewidth] (p23) to (w1);
                \draw[-,red,dashed,line width=\midlinewidth] (p23) to (w2);
                \draw[-,red,dashed,line width=\midlinewidth] (p23) to (w3);

                \draw[-,red,dashed,line width=\midlinewidth] (p31) to (w1);
                \draw[-,red,dashed,line width=\midlinewidth] (p31) to (w2);
                \draw[-,red,dashed,line width=\midlinewidth] (p31) to (w3);
                \draw[-,red,dashed,line width=\midlinewidth] (p32) to (w1);
                \draw[-,red,dashed,line width=\midlinewidth] (p32) to (w2);
                \draw[-,red,dashed,line width=\midlinewidth] (p32) to (w3);
                \draw[-,red,dashed,line width=\midlinewidth] (p33) to (w1);
                \draw[-,red,dashed,line width=\midlinewidth] (p33) to (w2);
                \draw[-,red,dashed,line width=\midlinewidth] (p33) to (w3);

                \prodnode[line width=\midlinewidth, right=2\halfdist of p33]{p11};
                \prodnode[line width=\midlinewidth, right=0.5 of p11]{p12};
                \prodnode[line width=\midlinewidth, right=0.5 of p12]{p13};

                \sumnode[line width=\midlinewidth, above=1.75 of p11]{s13};

                \draw[-,line width=\midlinewidth] (p12) to (s13);
                \draw[-,line width=\midlinewidth] (p11) to (s13);
                \draw[-,line width=\midlinewidth] (p13) to (s13);

                \node[circle,inner sep=0pt,dashed,red] (lw1) at ($(p11)!0.5!(s13)+(-0.3,0.2)$) {$\nweight_1$};
                \node[circle,inner sep=0pt,dashed,red] (lw2) at ($(p12)!0.5!(s13)+(-0.15,-0.3)$) {$\nweight_2$};
                \node[circle,inner sep=0pt,dashed,red] (lw3) at ($(p13)!0.5!(s13)+(0.3,0.2)$) {$\nweight_3$};

                \draw[->,red,dashed,line width=\midlinewidth] (w1) to[out=45,in=150] (lw1);
                \draw[->,red,dashed,line width=\midlinewidth] (w2) to[out=45,in=-180] (lw2);
                \draw[->,red,dashed,line width=\midlinewidth] (w3) to[out=-15,in=-110] (lw3);

            \end{tikzpicture}
        }
    \end{minipage}
    \caption{Detailed graphical representation of neural dependencies in a PNC. The sum unit at the top outputs the weighted sum of the three product units at the bottom right. The weights for the sum are the outputs of a neural network for which it holds that $\sum_{i=1}^3=1$. They are computed using a neural network that takes as input the values of the six product units at the bottom left.}
    \label{fig:neural_dependencies}

\end{figure}

\begin{algorithm}[t]
    \caption{Layer-wise circuit evaluation}
    \label{alg:layer}
    \hspace*{\algorithmicindent} \textbf{Input:} $\xvars_o$. $\Xvars_m$ \\
    \hspace*{\algorithmicindent} \textbf{Output:} $p(\xvars_o)$ \\
    \hspace*{\algorithmicindent} \textbf{Require:} $\Xvars_o {\cup} \Xvars_m {=} \Xvars$,  $\Xvars_o {\sqsubset} \Xvars_m$
    \begin{algorithmic}[1]
        \State $\component_{p,c} \gets init(p,c,\xvars_o)$
        \State $\component_{p,c} \gets
            \sum_{c'=1}^{N_D} \weight_{1,p,c,c'} \times \component_{p,c'}$ \Comment{LeafLayer}
        \State $l \gets 2$
        \While {$l< N_L$}
        \State $\component_{p,c} \gets \component_{2p,c} \times \component_{2p+1,c} $ \Comment{ProductLayer}
        \State
        $ \component_{p,c}
            \gets
            \sum_{c'=1}^{N_C} \nweight_{l,p,c,c'}
            \times
            \component_{p,c'}
        $ \Comment{NeuralSumLayer}
        \State $l \gets l+1$
        \EndWhile
        \State $\component_{p,c} \gets \component_{2p,c} \times \component_{2p+1,c} $ \Comment{ProductLayer}
        \State $\component_{1,1} \gets \sum_{c'=1}^{N_C} \weight_{N_L,1,c,c'} \times \component_{1,c'}$ \Comment{RootSum}
        \State \Return $\component_{1,1}$
    \end{algorithmic}
\end{algorithm}

Here, $\nweight_{l,p,c,c'}(\cdot)$ denotes a neural network and $N_C$ denotes again the number of components. The input to the neural net are the values of the set of units $\component_{l, p-\nu_{l,p}{:}p-1, 1, 1{:}N_C}$. The notation $1{:}N_C$ denotes the range of components $[1,\dots,N_C]$ (including the first and last element). Similarly, $p{-}\nu_p{:}p{-}1$ denotes the range of partitions $[p{-}\nu_{l,p},\dots,p{-}1]$. The parameter $\nu_{l,p}$ is a hyperparameter and describes how many partitions \textit{`to the left'} are taken into consideration when computing the neural weights. For instance, in the partial computation graph in Figure~\ref{fig:neural_dependencies}, we have $\nu_{l,p}{=}2$. When setting $\nu_{l,p}{=}0$ we recover the special case of layered (non-conditional) probabilistic circuits.

Observe that it is those \textit{intra-layer} neural dependencies that induce an order on the random variables: we can only perform the computations of the units in the right branch of a partition tree if the values in the left branch are known. This holds recursively.
\begin{definition}
    Given a poset of random variables $\poset (\Xvars)$, we call a neural sum layer valid if the (partial) variable order it induces respects $\poset(\Xvars)$.
\end{definition}

Using the equations above to compute the leaf, product, sum, and root layers, we can also write down the pseudocode for marginal inference in layered PNCs, which we give in Algorithm~\ref{alg:layer}.
For ease of exposition we assume again that the number of variables $\Xvars$ is a power of two, and we refer to our implementation for the general case\footnote{\url{https://github.com/pedrozudo/ProbabilisticNeuralCircuits.git}}.

The algorithm takes as input a set of random variable instantiation $\xvars_o$ and a set of random variables $\Xvars_m$. The latter ought to be marginalized out, and the circuit evaluation computes the probability $p(\xvars_o)$.

The first step is to initialize the $\component_{p,c}$, which we do with the following function:
\begin{align}
    init(p,c,\xvars_o)
    =
    \begin{cases}
        f_{p.c}(\xvar) & \text{if $\xvar \in \xvars_o$  } \\
        1              & \text{otherwise}
    \end{cases}
\end{align}
Each combination of $p$ and $c$ corresponds to an index $k$ in Equation~\ref{eq:con:circuit}. Note that we forego in Algorithm~\ref{alg:layer} the possibility of introducing conditional dependencies in the leaves. The $1$'s in the second case result from marginalizing out the probability distributions in the leaves for the variables in $\Xvars_m$.

The algorithm then proceeds by first performing the computations in the leaf, before looping through the internal sum and product layers, and finishes with computing the root layer.
In contrast to the $4$-index notation introduced in Section~\ref{sec:prelim} we only use two indices here. This is because the identification of the layers happens implicitly using the $l$ counter of the while loop.

\subsection{Implementation Using Convolutions}

\begin{figure}[t]
    \begin{minipage}[c]{0.49\columnwidth}
        \centering
        % \resizebox{0.7\columnwidth}{!}{

        \begin{tikzpicture}[scale=0.4]
            % Grid lines
            \draw[step=1cm, gray, thin] (0,0) grid (5,1);

            \fill[gray!50] (0,0) rectangle (1,1);
            \fill[gray!80] (1,0) rectangle (2,1);
            \path (0, -0.5) -- (0, -1.5);

        \end{tikzpicture}
        % }
    \end{minipage}
    \begin{minipage}[c]{0.49\columnwidth}
        \centering
        \begin{tikzpicture}[scale=0.4]
            % Grid lines
            \draw[step=1cm, gray, thin] (0,0) grid (3,3);

            % Color specific cells
            \fill[gray!30] (0,2) rectangle (1,3);
            \fill[gray!50] (0 ,1) rectangle (1,2);
            \fill[gray!80] (1,2) rectangle (2,3);

        \end{tikzpicture}

    \end{minipage}

    \caption{Graphical representation of \textit{half kernels} used for neural sum layers in layered PNCs. On the left we see a kernel used for one-dimensional data while on the right we have a $3\times3$ kernel for two-dimensional data. The gray blocks indicate the learnable parameters of the \textit{half kernels}, while a white square indicates a parameter fixed to zero. Effectively, the convolutional layer is blind with regard to the inputs for these zero elements of the kernel.}
    \label{fig:kernel}

\end{figure}
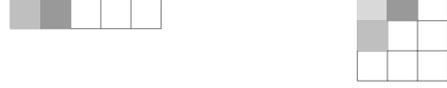

An efficient way of implementing neural sum layers is by means of convolutional neural networks.
We can readily see this if we interpret the components within a layer as the channels of the convolutional neural network and the partitions as the input dimension over which we perform the convolution.

Assuming that our one-dimensional data is ordered left to right, we can make sure that this order is respected throughout the neural sum layers by using convolutional networks with a \textit{half kernel} as depicted in the left of Figure~\ref{fig:kernel}. The convoluted channels can then be passed on to further layers. As a final activation function we use a softmax layer that normalizes the outputs such that they sum up to 1, per partition that is.
Importantly, the number of output channels has to match the number of components in the summation.

\section{Related Work}
\label{sec:related}

A first attempt at relaxing decomposability in probabilistic circuits was made by \citet{sharir2018sum} with the introduction of sum-product-quotient networks (SPQNs). SPQNs introduce the quotient unit (in addition to sum and products), which encode conditional probabilities within a circuit. We show that introducing extra units is unnecessary as the same effect can be obtained by generalizing sum and product units while retaining two types of computation units.

On a theoretical level this allows us to forego the introduction of expendable concepts such as conditional soundness~\cite[Definition 5]{sharir2018sum} or conditional and effective scopes of a unit~\citep[Section 3]{sharir2018sum}.
% This enables the definition of CPC using standard concepts present in the probabilistic circuit literature.
% Furthermore, our formulation does not force us to regard conditional mixing (in neural sum units for instance) as quotients but allows us to parametrize the conditional distribution as we wish. Using CPC instead of SPQNs also establishes a direct link between probabilistic circuits and Bayesian networks (in form of hierarchical mixtures).

On the practical side we will show in Section~\ref{sec:experiments} that neural sum layers outperform the \textit{conditional mixing operator} (CMO) proposed as a building block for SPQNs~\citep[Definition 6]{sharir2018sum}. As a matter of fact, PNCs strictly generalize SPQNs constructed with CMOs.

\begin{corollary} CMO-SPQNs are PNCs.
    \label{cor:spq_worse}
\end{corollary}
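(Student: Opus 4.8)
The plan is to show that the conditional mixing operator (CMO) of \citet{sharir2018sum} is nothing but a special case of the neural sum unit from Definition~\ref{def:pnc}, so that any SPQN assembled out of CMOs already satisfies all three requirements of a PNC without further modification. This mirrors the argument used just above to prove that probabilistic circuits are PNCs: there the neural weights were specialized to constants, and here I would specialize the neural network $\nweight_{kj}(\cdot)$ to the particular parametric map that the CMO uses to produce its mixing coefficients. Concretely, I would first recall the explicit functional form of a CMO -- it mixes its input components with weights obtained by feeding the conditioning inputs through a softmax-normalized map -- and then exhibit that weight map as one admissible instantiation of the neural network $\nweight_{kj}: \Omega(\ancestors(\xvar_n)) \rightarrow [0,1]$ appearing in Equation~\ref{eq:pnc}.

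The verification would then proceed in three ordered steps. First, the softmax normalization built into the CMO immediately gives $\nweight_{kj}\in[0,1]$ together with $\sum_{j\in\inputs(k)}\nweight_{kj}=1$, which is exactly the weight constraint demanded in Definition~\ref{def:pnc}. Second, I would identify the set of variables the CMO is permitted to condition on with a subset of $\ancestors(\xvar_n)$, i.e.\ translate the conditional/effective scope bookkeeping of \citet{sharir2018sum} into the poset language of Section~\ref{sec:poset}, so that the CMO weights are legitimately functions of $\xvars_{\ancestors(n)}$ only. Third, I would note that the leaf and product units of an SPQN already coincide with those of a CPC, and that the structural soundness conditions SPQNs impose entail conditional smoothness and conditional decomposability. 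Once all three match, the SPQN is a conditionally smooth and conditionally decomposable CPC whose sum units have precisely the form of Equation~\ref{eq:pnc}, hence a PNC.

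The main obstacle is the second step: bridging the two dependency-tracking formalisms. SPQNs account for variable dependencies through conditional and effective scopes, whereas PNCs account for them through the parent/ancestor relation on a poset $\poset(\Xvars)$. I would need to argue that the variables a CMO may condition on -- exactly those licensed by conditional soundness -- never include a descendant of any variable in the scope of the components being mixed, so that the induced variable order respects some poset and the weights depend on ancestors alone. Establishing this containment is what certifies the CMO weights as a valid $\nweight_{kj}$ and thereby reduces the inclusion CMO-SPQNs $\subseteq$ PNCs to a direct consequence of Definition~\ref{def:pnc}. The reverse inclusion fails, since an arbitrary neural weight need not be realizable as a CMO, which is precisely why the generalization asserted in the surrounding text is strict.
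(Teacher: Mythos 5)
Your proposal takes essentially the same route as the paper: the paper's proof simply rewrites the CMO in its own layered-PNC notation and reads off the required instantiation of the neural weight as $\nweight_{l,p,c,c'} = \weight_{l,p,c,c'}\,\bigl(\prod_{\component'}\component'\bigr) \,/\, \sum_{c''}\weight_{l,p,c,c''}\,\bigl(\prod_{\component'}\component'\bigr)$, where the products range over the left-neighbouring partitions --- exactly the specialization of $\nweight_{kj}(\cdot)$ you describe, with the poset-compatibility issue handled by construction since the conditioning set is precisely the partitions to the left. One minor correction: the normalization in the CMO is the explicit quotient above rather than a softmax, but the weights are still nonnegative and sum to one, so your first verification step goes through unchanged.
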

\begin{proof}
    This can trivially be shown by picking $\nweight(\cdot)$ such that
    \begin{align}
         & \component_{l,p,2,c}     \tag{QuotientSumLayer}
        \\
         & =
        \frac{
            \sum_{c'=1}^{N_C} \weight_{l,p,c,c'}
            \times
            (\prod_{\component' \in  \component_{l, p-\nu_{l,p}:p-1, 1, c'}}\kappa' )
            \times
            \component_{l,p,1,c'}}
        {
            \sum_{c'=1}^{N_C}
            \weight_{l,p,c,c'}
            \times
            (\prod_{\component' \in  \component_{l, p-\nu_{l,p}:p-1, 1, c'}}\kappa' )
        }
        \nonumber
    \end{align}
    where we use our notation from Section~\ref{sec:pnc} to write down the CMO of~\citet{sharir2018sum}.
\end{proof}
Guided by Corollary~\ref{cor:spq_worse} we implemented CMO-SPQNs using a convolutional layer by simply fixing the non-zero elements of the kernel (\cf~Figure~\ref{fig:kernel}) to one. Performing the convolution on probabilities in log-space then simply corresponds to multiplying them in linear space. In this fashion we easily obtain the product present in the quotient sum layer.
An important difference between CMO-SPQNs and PNCs is that for the former components (or channels) must not mix with each other, which hinders their performance in terms of function approximation.

The limited expressive power of sum units was also noted by \citet{shao2022conditional}, which led them to introduce \textit{conditional sum-product networks}. The idea is to condition the weights of the sum units in a probabilistic circuit on the value of a random value (using neural networks). The main difference to our work is that these random variables live out-side of the circuits itself. Speaking in terms of Bayesian networks the approach of~\citet{shao2022conditional} is only capable of encoding a single Bayesian network while CPC encode hierarchical mixtures of Bayesian networks.

\section{Experimental Evaluation}
\label{sec:experiments}

% In our experimental evaluation we answer three questions:
% \begin{description}
%     \item[Q1] How do PNCs fair against PQCs and PCs on density estimation benchmarks?
%     \item[Q2] How do PNCs fair against state-of-the-art methods on density estimation benchmarks?
%     \item[Q3] Can PNCs be used for discriminative learning?
% \end{description}

For our experimental evaluation we used the MNIST family of dataset. That is, the original MNIST~\citep{deng2012mnist}, FashionMNIST~\citep{xiao2017fashion}, and also EMNIST~\citep{cohen2017emnist}.
We implemented PNCs (and also SPQNs) in PyTorch and Lightning\footnote{\url{https://lightning.ai/}}, and ran all our experiments on a DGX-2 machine with V100 Nvidia cards.

\subsection{How Do PNCs Fair Against PQCs and PCs?}
\label{sec:q1}

\begin{table*}[t]
    \footnotesize
    \centering
    \begin{tabular}{lccccccccc}
                          & PNC             & PQC    & PSC    & SHCLT  & HCLT   & CMC    & RAT-SPN & IDF     & BitSwap \\
        % \midrule
        \cmidrule(lr){2-4}
        \cmidrule(lr){5-10}
        MNIST             & $\mathbf{0.87}$ & $1.20$ & $1.32$ & $1.14$ & $1.20$ & $1.28$ & $1.67$  & $1.90$  & $1.27$  \\
        FashionMNIST      & $\mathbf{2.51}$ & $3.47$ & $3.66$ & $3.27$ & $3.34$ & $3.55$ & $4.29$  & $3.47$  & $3.28$  \\
        EMNIST (mnist)    & $\mathbf{1.36}$ & $1.84$ & $2.07$ & $1.52$ & $1.77$ & --     & $2.56$  & $2.07$  & $1.88$  \\
        EMNIST (letters)  & $\mathbf{1.33}$ & $1.83$ & $2.07$ & $1.58$ & $1.80$ & --     & $2.73$  & $1.95$  & $1.84$  \\
        EMNIST (balanced) & $\mathbf{1.35}$ & $1.86$ & $2.16$ & $1.60$ & $1.82$ & --     & $2.78$  & $2.15$  & $1.96$  \\
        EMNIST (byclass)  & $\mathbf{1.27}$ & $1.76$ & $2.02$ & $1.54$ & $1.85$ & --     & $2.72$  & $1.98$  & $1.87$  \\
        \midrule
        {\#} parameters   & $2.8M$          & $2.6M$ & $2.6M$ & $7.0M$ & $7.0M$ & $0.1M$ & $7.0M$  & $24.1M$ & $2.8M$  \\
    \end{tabular}
    \caption{Test set bpd for MNIST datasets (lower is better). The last row shows the number of parameters for each model (the symbol $M$ stands for millions).}
    \label{tab:mnist}
\end{table*}

\paragraph*{Setup}
To answer the first question we trained a PNC, an SPQNs and a PC on the MNIST family of datasets and minimize the negative log-likelihood. In order to render the three models commensurable, all three have the same underlying circuit structure. That is, we start with a grid of $28\times 28$ pixels and merge, in an alternating fashion, rows and columns. This corresponds to product nodes in the circuits. Between each merge we perform a summation. For probabilistic (sum) circuits (PSCs) this is the usual sum unit, for probabilistic quotient circuits (PQCs or SPQNs) we use the conditional mixing operator, and for PNCs we use a neural sum layer. The kernel type used for PQCs and PNCs is the one depicted in the right of Figure~\ref{fig:kernel}.
For the leaves, which encode the $28\times 28$ pixels, we use one categorical distribution with $256$ categories for each pixel. This allows us to represent all possible pixel values.
Within the circuits we used $12$ components per partition.
Merging rows and columns in an alternating fashion and using the kernel from Figure~\ref{fig:kernel} induces a specific variable ordering on a 2-dimensional grid, which we graphically represent in Figure~\ref{fig:2dorder}.

All three models were trained for $100$ epochs using Adam~\citep{kingma2014adam} with a learning rate of $0.001$ and a batch size of $50$.
The best model was selected using a $90-10$ train-validation data split where we monitored the negative log-likelihood on the validation set. We refer to the configuration files of the experiments for more details.

\paragraph*{Results}
We compare the three architectures using \textit{bits per dimension}, which are calculated from the average negative log-likelihood ($\overline{NLL}$) as follows: $bpd = \nicefrac{\overline{NLL}}{(\log{2} \times D)}$, here $D=28^2$ for MNIST datasets.

The results are reported in Table~\ref{tab:mnist} in the first three columns. We see that quotient circuits outperform sum circuits when it comes to minimizing the negative log-likelihood (minimizing $bpd$). We also see that neural circuits, with their data dependent weights, outperform both other methods. Note that PNCs and PQCs allow for the same set of tractable queries, while PNCs are more performant.

% \question{For the PNC structures used for 28x28 images, have the following questions:
% }

% \question{
% A. What is the missingness pattern that is allowed in the images so we still get tractable queries? Is there a intuitive way to see which missing patterns are allowed for the pixels?
% }
% \answer{Yes, there is an intuitive pattern. Consider the partition tree in Figure 3. We can then marginalize out the variables the right branch of the tree. For images this is similar but with a second dimension.}

\begin{figure}[t]

    % \begin{minipage}[c]{0.49\columnwidth}
    %     \centering

    %     \begin{tikzpicture}[scale=0.5] % Adjust the scale factor as needed
    %         % Define grid dimensions
    %         \def\gridSize{4}
    %         \def\cellSize{1} % Adjust the size of each cell

    %         % Draw the grid
    %         \draw[step=\cellSize,gray,very thin] (0,0) grid (\gridSize*\cellSize, \cellSize);

    %         % Manually label the cells with centered text
    %         \foreach \x/\label in {0.5/1,1.5/2,2.5/3,3.5/4} {
    %                 \node at (\x, 0.5) {\label};
    %             }
    %         \path (0, 0) -- (0, -1.1);
    %     \end{tikzpicture}
    % \end{minipage}
    % \begin{minipage}[c]{\columnwidth}
    %     \centering
    \centering
    \begin{tikzpicture}[scale=0.6]
        % Define grid dimensions
        \def\gridSize{4}
        \def\cellSize{1} % Adjust the size of each cell

        % Draw the grid
        \draw[step=\cellSize,gray,very thin] (0,0) grid (\gridSize*\cellSize, \gridSize*\cellSize);

        % Manually label the cells
        \node at (0.5, 3.5) {1};
        \node at (1.5, 3.5) {2};
        \node at (2.5, 3.5) {5};
        \node at (3.5, 3.5) {6};

        \node at (0.5, 2.5) {3};
        \node at (1.5, 2.5) {4};
        \node at (2.5, 2.5) {7};
        \node at (3.5, 2.5) {8};

        \node at (0.5, 1.5) {9};
        \node at (1.5, 1.5) {10};
        \node at (2.5, 1.5) {13};
        \node at (3.5, 1.5) {14};

        \node at (0.5, 0.5) {11};
        \node at (1.5, 0.5) {12};
        \node at (2.5, 0.5) {15};
        \node at (3.5, 0.5) {16};
    \end{tikzpicture}

    % \end{minipage}

    \caption{For simplicity's sake, assume that we have a $4\times4$ grid of pixels (instead of the $28\times 28$ MNIST grid). Recursively merging rows and columns and using the kernel from Figure~\ref{fig:kernel} then gives us the  total pixel order as indicated in the grid. We can then marginalize out pixels with higher numbers before pixels with lower numbers.
        For the specific case here we could marginalize out the lower half of the $4\times 4$ image, while retaining a probability distribution for the upper half.
    }
    \label{fig:2dorder}

\end{figure}

\subsection{How Do PNCs Fair Against State of the Art?}

\paragraph*{Setup} We use again the same PNCs as in Section~\ref{sec:q1} and compare them to (decomposable) probabilistic circuits from the literature: hidden Chow-Liu trees (HCLT)~\citep{liu2021tractable}, sparse HCLT (SHCLT)~\citep{dang2022sparse}, random sum-product networks (RAT-SPN)~\citep{peharz2019random}, and continuous mixture circuits (CMC)~\citep{correia2023continuous}. For completeness, we also include the bpd for IDF (a flow-based approach)~\citep{hoogeboom2019integer} and for BitSwap (a hierarchical latent variable model)~\citep{kingma2019bit}. Note that the results reported for RAT-SPN were taken from~\citep{dang2022sparse}.

\paragraph*{Results}
We see again that PNCs outperform the other methods in terms of bpd. On the one hand this is due to the increased expressive power of PNCs obtained by relaxing decomposability (and thereby losing tractability). On the other hand, this is also due to the fact that PNCs use neural sum units. This can be seen by comparing the results for PQC and SHCLT. While PQC are in theory more expressive than SHCLT we don't see this in practice: the learned structure of SHCLTs overcomes this expressivity gap. However, we see that PNCs manage to outperform SHCLTs.
Interestingly our implementation of PCs (PSC) does produce lower bpd than RAT-SPNs, suggesting that the latter are a rather weak baseline when it comes to density estimation on image data.

\subsection{Can PNCs Perform Discriminative Learning?}
\paragraph*{Setup}
Assume that we have a data point $\xvars$ for the random variables in $\Xvars$ and a label $y$ for this data point. Using Bayes rule we can rewrite the discriminative probability using generative distributions:
$
    p(Y{=}y{\mid} \Xvars{=}\xvars )
    {=}
    \frac{
        p(x{\mid} y)
    }
    {
        \sum_{z\in \Omega(Y)} p(x{\mid} y)
    }
$,
where we assume that the class prior is identical for each of the classes belonging to the sample space $\Omega(Y)$. This means that for every class in $\Omega(Y)$ we have a separate distribution. That is, a separate circuit. In the case of MNIST and FashionMNIST we have ten classes. The resulting ten circuits are jointly optimized using cross-entropy~\citep{peharz2019random}.
Architecturally, we used again $12$ components per partition and the $10$ different circuits share all parameters but the parameters in the leaf layer and root layer.

Furthermore, instead representing pixels as categorical random variables (with sample space $\{0,\dots, 255 \}$) we model them as continuous random variables with samples belong to the interval $[0,1]$. We obtain this value $v$ by dividing the pixel by $255$. Each leaf has then two inputs: the corresponding value $v$ of of the pixel itself and $1-v$.
This follows the protocol of~\citep[]{liang2019learning}.
Apart from optimizing the cross-entropy instead of the log-likelihood the training protocol was identical to the one for density estimation.

\paragraph*{Results}

We report the comparison in terms of classification accuracy on the test set, which we show in Table~\ref{tab:discrim}. We compare PNCs and PQCs to logistic circuits (LCs)~\citep{liang2019learning} and RAT-SPNs~\Citep{peharz2019random}. We see that PNCs perform better than PQC. However, neither reaches the accuracies of LCs nor RAT-SPNs. We hypothesize that this is due to a lack of regularization. For instance, PNCs reach perfect train accuracy on MNIST and near perfect train accuracy on FashionMNIST.
Furthermore, the authors of LCs and RAT-SPNs reported having used aggressive regularization techniques -- for the former on their Github page\footnote{\url{https://github.com/UCLA-StarAI/LogisticCircuit}} and the latter in~\Citep[Section 4.2]{peharz2019random}.
While we experimented with various regularization techniques such as weight decay~\Citep{loshchilov2018fixing} or the stochastic delta rule~\citep{hanson1990stochastic}, we were not able to obtain consistent improvements. We leave the study of effective regularization techniques for discriminative learning with PNCs for future work.

\begin{table}[t]
    \footnotesize
    \centering
    \begin{tabular}{lcccc}
                     & PNC     & PQC     & LC     & RAT-SPN  \\
        \cmidrule(lr){2-3}
        \cmidrule(lr){4-5}
        MNIST        & $98.04$ & $97.38$ & $99.4$ & $98.29$  \\
        FashionMNIST & $88.84$ & $87.63$ & $91.3$ & $89.89 $ \\
    \end{tabular}
    \caption{Test accuracies for MNIST and FashionMNIST.}
    \label{tab:discrim}
\end{table}

\section{Conclusions \& Future Work}

We first introduced the concept of a conditional probabilistic circuit, from which we were then able to construct probabilistic neural circuits, which generalize probabilistic circuits and sum-product-quotient networks. Note that the construction of PNCs would not have been possible from the formulation of \citet{sharir2018sum}. Furthermore, our formulation allows us to intuitively interpret PNCs as neural approximations of deep mixtures of Bayesian networks.

Experimentally, we have shown that for density estimation PNCs deliver on the promise made by SPQNs. That is, giving up on tractability improves function approximation in practice. For the discriminative case the situation is more nuanced. While PNCs achieve perfect accuracy on the training set a lack of proper regularization techniques prevents them from matching accuracies obtained by competing methods. We would also like to note that more sophisticated architecture designs for PNCs could possibly further improve their performance. For instance, using different numbers of components per partition dramatically increased the performance of SHCLT when compared to HCTL.

In future work we would like to explore the potential of PNCs for sampling, a task that probabilistic models usually struggle with~\citep{lang2022elevating} as good likelihood estimates do not correlate with sample quality~\citep{theis2016note}.
This would also establish a tighter link to autoregressive models (ARM) (\eg PixelCNN~\citep{van2016conditional}) and we might use ideas developed there for PNCs. In this regard, any-order ARM~\citep{uria2014deep,shih2022training} seem to be of particular interest to PNCs as this could allow for arbitrary conditioning sets in PNCs.

Other open questions concern structure learning for PNCs and applying them to tabular data or finding applications of PNCs -- an obvious candidate would be lossless compression with circuits~\citep{liu2022lossless}, as any-order marginalization is not necessary.

\section*{Acknowledgements}

This project received funding from the Wallenberg AI, Autonomous Systems and Software Program (WASP) funded by the Knut and Alice Wallenberg-Foundation, as well as from the TAILOR Connectivity Fund (part of the TAILOR project funded by the EU Horizon 2020 research and innovation program under GA No 952215).
The author would like to thank the anonymous reviewers for the valuable comments.

\bibliography{references}

\end{document}